\def\XXint#1#2#3{{\setbox0=\hbox{$#1{#2#3}{\int}$ }
\vcenter{\hbox{$#2#3$ }}\kern-.6\wd0}}
\newtheorem{theorem}{Theorem}[section]
\newtheorem{definition}[theorem]{Definition}
\title[Fractional Heat Kernel for Semi-Supervised Graph Learning ]{Fractional Heat Kernel for Semi-Supervised Graph Learning with Small Training Sample Size}
\author[F. Bozorgnia et al.]{Farid  Bozorgnia,  Vyacheslav Kungurtsev, Shirali Kadyrov,  Mohsen Yousefnezhad}
 \address{ Department of Mathematics,   New Uzbekistan University,}
\email{f.bozorgnia@newuu.uz}
     \address{Department of Computer Science, Faculty of Electrical Engineering, Czech Technical University in Prague,  Czech Republic}
\email{kunguvya@fel.cvut.cz}
 \address{Department of General Education, New Uzbekistan University,}
\email{sh.kadyrov@newuu.uz
}
\date{}
\begin{document}

\maketitle

\begin{abstract}
In this work, we introduce novel algorithms for label propagation and self-training using fractional heat kernel dynamics with a source term. We motivate the methodology through the classical correspondence of information theory with the physics of parabolic evolution equations. We integrate the fractional heat kernel into Graph Neural Network architectures such as Graph Convolutional Networks and Graph Attention, enhancing their expressiveness through adaptive, multi-hop diffusion. By applying Chebyshev polynomial approximations, large graphs become computationally feasible. Motivating variational formulations demonstrate that by extending the classical diffusion model to fractional powers of the Laplacian, nonlocal interactions deliver more globally diffusing labels. The particular balance between supervision of known labels and diffusion across the graph is particularly advantageous in the case where only a small number of labeled training examples are present. We demonstrate the effectiveness of this approach on standard datasets.
\end{abstract}

\textbf{Key words and phrases.} Semi-supervised learning, Heat Kernel, Graph Neural Networks (GNNs), Self-training.

\section{Introduction}
Graphs provide a natural representation for structured data in applications ranging from social networks to molecular graphs in drug discovery~\cite{hamilton2017representation, bronstein2017geometric}. Learning from such data is challenging, particularly in semi-supervised scenarios where only a small fraction of nodes are labeled, as seen in large social networks or protein–function prediction tasks~\cite{zhu2003semi, zhou2004learning}. Graph Neural Networks (GNNs) have made significant progress in addressing these challenges~\cite{scarselli2008graph, wu2020comprehensive}, but they often suffer from issues like over-smoothing, where node features become indistinguishable after multiple layers of message passing~\cite{li2018deeper, oono2020graph}. This makes deep GNNs difficult to tune, especially for tasks requiring information propagation across distant nodes.

To address these limitations, we propose a theoretically sound and scalable approach based on heat kernel diffusion, which leverages the geometry of the graph structure via the matrix exponential of the negative graph Laplacian, $e^{-tL}$~\cite{grigoryan2009heat}. Rooted in mathematical physics~\cite{chung1997spectral, kondor2002diffusion}, heat kernel diffusion offers several advantages for semi-supervised learning: it enables multiscale smoothing through the propagation time parameter $t$, preserves total mass for probabilistic interpretations, and acts as a low-pass filter to reduce high-frequency noise while retaining critical graph structure~\cite{chung1997spectral}. Recent work highlights the growing role of partial differential equations (PDEs) in data science, particularly for graph learning and scalable algorithms~\cite{BertozziDrenskaLatzThorpe2025, BBE, BFAE}.

The remainder of this paper is organized as follows: Section 2 analyzes the spectral decomposition of the heat kernel matrix and develops efficient approximation techniques using Chebyshev polynomials. Section 3 introduces the fractional heat kernel and its properties. Section 4 integrates heat kernel diffusion into modern GNN architectures. Section 5 presents our proposed framework for fractional Laplacian semi-supervised learning with continuous supervision and self-training algorithms. Section 6 evaluates our approach across datasets, demonstrating improved performance in semi-supervised learning, particularly with limited labeled data.

\section{Spectral Decomposition Analysis of the Heat Kernel Matrix}

Heat kernel diffusion, pioneered by Kondor and Lafferty~\cite{kondor2002diffusion} for discrete graphs, adapts continuous heat equations to model information flow based on node similarity, with spectral methods formalized by Coifman and Lafon~\cite{coifman2006diffusion} through Diffusion Maps to preserve intrinsic geometry. Spectral analysis~\cite{chung1997spectral,von2007tutorial} and mass preservation properties~\cite{grigoryan2009heat} have deepened its theoretical foundation, while recent integrations into graph neural networks (GNNs) by Xu et al.~\cite{xu2020graph} and Berberidis et al.~\cite{berberidis2019adaptive} enhance semi-supervised learning and scalability. Fractional Laplacians, explored by Evangelista and Lenzi~\cite{EvangelistaLenzi2018}, enable nonlocal interactions but remain underexplored in GNNs. This section analyzes the spectral properties of the heat kernel matrix $   e^{-t\mathbf{L}}   $ and presents approximation techniques for scalable computation on large graphs.

We first define the notations used throughout this analysis. To this end, we consider an undirected, weighted graph $G = (V, E, W)$ or $G = (V, W)$  with  $V={\{x_1, x_2, \cdots, x_n}\}$ set of nodes and edge set $E$. We denote the weight matrix as $\mathbf{W}$, where $W_{ij} > 0$ represents the weight of the edge between nodes $i$ and $j$, and $W_{ij} = 0$ if no edge exists. The degree matrix $\mathbf{D}$ is diagonal with $D_{ii} = \sum_j W_{ij}$ representing the weighted degree of node $i$. We use the combinatorial graph Laplacian $\mathbf{L} = \mathbf{D} - \mathbf{W}$, which ensures mass conservation in diffusion processes. The spectral decomposition of $\mathbf{L}$ yields eigenvalues $0 = \lambda_1 \leq \lambda_2 \leq \cdots \leq \lambda_n$ with corresponding orthonormal eigenvectors $\phi_1, \phi_2, \ldots, \phi_n$, that can be arranged in the orthonormal basis matrix $\mathbf{U} = [\phi_1, \phi_2, \ldots, \phi_n]$. For functions $u: V \to \mathbb{R}$ defined on graph nodes, we use vector concatenation term $\mathbf{u} = [u(x_1), u(x_2), \ldots, u(x_n)]^{\top}$. The inner product between functions is denoted $\langle \phi_k, \mathbf{u} \rangle = \sum_{i=1}^n \phi_k(x_i) u(x_i)$, and $\mathbf{1}$ represents the all-ones vector. The heat kernel matrix is defined as $\mathbf{H}_t = e^{-t\mathbf{L}}$ with entries $H_t(x_i, x_j)$ or $H_t(i, j)$ representing diffusion affinities between nodes.  $U^{0}$ is the initial conditions versus  $U(t)$ as the evolved state.   All norms are $\ell_2$ norms unless otherwise specified, and $\text{vol}(G) = \sum_{i \in V} d_i$ denotes the total volume of the graph.  Finally,  \( \mathds{1}_l \in \mathbb{R}^{l \times 1} \) is a column vector of ones, and \( \mathbf{0}_{(n - l) \times k} \) is a zero matrix for the unlabeled nodes.

\subsection{Spectral Properties of the Heat Kernel}
In this section, we examine the spectral properties of the heat kernel matrix $e^{-tL}$ for a graph Laplacian $L$. Considerable theoretical and computational literature has been developed for this class of matrices. Understanding this decomposition is essential for designing efficient approximations and integrating the kernel into algorithmic contexts.

Let $L$ represent the symmetric graph Laplacian matrix, with eigendecomposition $(\lambda_k, \phi_k)$ for $k = 1, \ldots, n$, where $0 = \lambda_1 \leq \lambda_2 \leq \cdots \leq \lambda_n$ are the eigenvalues and $\phi_k$ form an orthonormal eigen-basis of $\mathbb{R}^n$~\cite{chung1997spectral}. The heat diffusion operator $e^{-tL}$ can then be diagonalized as follows. Let $U = [\phi_1, \ldots, \phi_n]$ and $e^{-t\Lambda}$ be the diagonal matrix with the entries $e^{-t\lambda_k}$. Using the spectral theorem, one obtains:
\begin{equation}
e^{-tL} = U e^{-t\Lambda} U^{-1} = \sum_{k=1}^n e^{-t\lambda_k} \phi_k \phi_k^{\top}.
\end{equation}

We used the fact that $L$ is symmetric and $U^{-1} = U^{\top}$. This shows that $e^{-tL}$ filters each eigenmode $\phi_k$ by the factor $e^{-t\lambda_k}$, so high-frequency modes (large $\lambda_k$) exhibit a strong decay, while low-frequency modes (small $\lambda_k$) are comparatively preserved.  The eigenvalues of the Laplacian control how much information is spread. In fact, diffusion kernels on graphs can be viewed as a discretization of Gaussian kernels in Euclidean space~\cite{xu2020graph}. The parameter $t$ controls the diffusion scale, allowing one to interpolate between local and global geometry. This representation preserves diffusion distances, a robust measure of connectivity in the graph, and enables effective feature propagation and clustering.  

Our analysis aligns with the Diffusion Maps framework introduced in~\cite{coifman2006diffusion}, where the heat kernel operator $e^{-tL}$ defines a diffusion process on the graph that reflects the geometry of an underlying manifold. Specifically, given the eigendecomposition $L = U \Lambda U^{\top}$, one can construct an embedding of node $x_i$ as
\begin{equation}
x_i \mapsto \left(e^{-t\lambda_2} \phi_2(x_i), e^{-t\lambda_3} \phi_3(x_i), \ldots, e^{-t\lambda_m} \phi_m(x_i)\right),
\end{equation}
which captures the intrinsic low-dimensional structure of the data.

Using the eigenvectors $\phi_k$ defined above, we can write the action of the heat kernel on a vector $u$ explicitly. Let $u$ be a column vector-valued function with components $u(x_j)$ for each node $x_j \in V$. The $i$-th component of $e^{-tL}u$ is:
\begin{equation}\label{e2}
(e^{-tL}u)(x_i) = \sum_{k=1}^n e^{-t\lambda_k} \langle\phi_k, u\rangle \phi_k(x_i),
\end{equation}
where $\langle\phi_k, u\rangle = \sum_{j=1}^n \phi_k(x_j) u(x_j)$ is the projection of $u$ onto eigenmode $\phi_k$. Equivalently, in matrix form, it can be written as
\[
u(t) = U e^{-t\Lambda} U^{\top} u(0).
\]

If we denote the heat kernel matrix on the graph by
\[
H_t(x_i, x_j) := (e^{-tL})_{ij},
\]
then (\refeq{e2}) reads as
\[
(e^{-tL}u)(x_i) = \sum_{j\in V} H_t(x_i, x_j) u(x_j),
\]
where
\[
H_t(x_i, x_j) = \sum_k e^{-t\lambda_k} \phi_k(x_i) \phi_k(x_j).
\]
Here, $H_t(x_i, x_j)$ corresponds to a fundamental solution. Given the initial condition $u_0 = \delta_i$ a  unit mass at node $x_i$, the quantity $H_t(x, x_i) = (e^{-tL}\delta_i)(x)$ is the solution at $x$ at time $t$. Note that $H_t$ is symmetric and positive, and for each fixed $i$, observe that
\begin{equation}\label{eq:conservation}
\sum_j H_t(x_i, x_j) = 1; \quad \text{conservation of total mass under heat flow}~\cite{grigoryan2009heat}.
\end{equation}

One can interpret $H_t(x_i, x_j)$ as quantifying diffusion affinity between $x_i$ and $x_j$: it is large if there are many short paths connecting $i$ to $j$, and decays with the graph distance between nodes, similar to how Gaussian kernels decay with Euclidean distance. For very small $t$, $H_t(x_i, x_j)$ is significantly different from zero only if $i = j$ or $j$ is a neighbor of $i$. For larger $t$, $H_t(x_i, x_j)$ captures multi-hop connections, effectively averaging the weights of all paths from $x_i$ to $x_j$. This multi-hop weighting is one of the unique, favorable properties of heat kernel diffusion, which is different from one-step neighbor averaging
\[
u(x_i) = \frac{1}{|N(x_i)|} \sum_{x_j \in N(x_i)} u(x_j).
\]
The heat kernel, by contrast, computes a weighted global average where distant nodes contribute through longer paths, although with exponentially smaller weights.

$e^{-tL}$ is a self-adjoint contraction on the space of functions defined on graph nodes, and so its application as an operator improves regularity~\cite{grigoryan2009heat}. One can see that
\[
\|L^{1/2} e^{-tL} u\|_2^2 = \sum_k \lambda_k e^{-2t\lambda_k} |\langle u, \phi_k\rangle|^2 \ll \|L^{1/2} u\|_2^2.
\]

This indicates that $u(t) = e^{-tL} u_0$ has lower Dirichlet energy $u(t)^{\top} L u(t)$ than the initial function, reflecting the $H^1$ regularization effect of heat diffusion~\cite{belkin2006manifold}. Equivalently, one can observe that $e^{-tL}$ acts as a regularizer in noting how its operation effectively solves the following variational problem.
\begin{equation}\label{eq:vargraphL}
\min_u \|u - u_0\|_2^2 + t u^{\top} L u.
\end{equation}
The exponential matrix $e^{-tL}$ is the fundamental solution of the heat equation applied to the graph. If $u_0: V \to \mathbb{R}$ is an initial condition, then $u(t) = e^{-tL} u_0$ is the unique solution of the graph heat equation 
\[\
\frac{du}{dt} + Lu = 0.
\]
For small $t$, $e^{-tL} - I + tL = O(t^2)$, meaning the solution $u(t)$ initially changes at a rate $-L u_0$, i.e., each node's value tends to be the graph-weighted average of its neighbors. As $t$ increases, the higher modes ($k > 1$) components of the eigendecomposition of the solution decay toward zero. Thus, the long-term behavior of the solution $u(t)$ to the graph heat equation is dominated by $\phi_1$, the eigenvector associated with $\lambda_1 = 0$.

On a connected graph, $\phi_1$ is the constant eigenvector, so as $t$ tends to infinity, $e^{-tL} u_0$ converges to a constant function equal to the average of the initial
values~\cite{lovasz1993random}.   More precisely, if $0 = \lambda_1 < \lambda_2$, one has
\begin{equation}
|e^{-tL} u - \bar{u} \mathbf{1}| \leq e^{-t\lambda_2} |u - \bar{u} \mathbf{1}|,
\end{equation}
where $\mathbf{1}$ the constant vector and by $\bar{u}$ we mean the average of $u$, i.e., $\bar{u}=\frac{1}{n}\mathbf{1}\mathbf{1}^T u$. For finite $t$, the operator $e^{-tL}$ preserves the low-frequency structure while exponentially censoring high-frequency noise. This scale-dependent smoothing means that local features (small communities or sharp data variations) are retained for small $t$, while global mixing occurs for large $t$~\cite{avramidi2015heat}. This permits the tuning of $t$ to be done in accordance with the preference across the tradeoffs between overtraining and oversmoothing.

In stochastic process modeling, the heat kernel provides the transition density for the continuous-time random walk on the graph. The heat kernel's deep connection to continuous-time random walks on graphs has been explored in depth in~\cite{lovasz1993random, burioni2005random}. Specifically, if $X_t$ is a continuous-time random walk with generator $-L$, then:
\begin{equation}
P(X_t = j | X_0 = i) = (e^{-tL})_{ij} = H_t(i,j)
\end{equation}
Here, $H_t(x_i, x_j)$ can be interpreted as the probability that a continuous-time random walk starting at node $x_i$ will arrive at node $x_j$ after time $t$. More generally, the heat kernel satisfies the Chapman-Kolmogorov equation: $H_{t+s} = H_t \cdot H_s$.   This probabilistic interpretation provides several key insights. 
For one, it can be observed that the heat kernel diffusion corresponds to averaging over all possible random walk paths between nodes.  For connected graphs, the stationary distribution for this system is:  
\[
\lim_{t \to \infty} H_t(i,j) = \frac{d_j}{\text{vol}(G)}.
\]

In the continuous-time random walk framework, a walker at node $x_i$ waits for an exponentially distributed time with rate $d_i$ (the degree of node $x_i$) before jumping to a neighbor. The transition probabilities are governed by the same Laplacian matrix $L$, making the heat kernel the natural bridge between deterministic diffusion processes and stochastic walk processes. Given the inherent statistical randomness in learning, this bridge assists in understanding its properties as far as its incorporation in GNNs.

This probabilistic interpretation also explains why $H_t(x_i, x_j)$ captures multi-hop connectivity: longer paths between nodes $x_i$ and $x_j$ contribute to the total probability through the superposition of all possible random walk trajectories.

\subsection{Approximation Techniques}
In practical applications on large graphs, computing the full eigendecomposition of the Laplacian $L$ is computationally expensive, requiring $O(n^3)$ operations and $O(n^2)$ storage.  We present two approximation methods.

\subsubsection{Truncated Spectral Decomposition}
 
 One can approximate the heat kernel operator $e^{-tL}$ by truncating the spectral expansion to the first $m$ eigenmodes:
\begin{equation*}
e^{-tL}u \approx \sum_{k=1}^m e^{-t\lambda_k} \langle u, \phi_k\rangle \phi_k,
\end{equation*}
where recall that $\lambda_k$ and $\phi_k$ are the $k$-th eigenvalue and eigenvector of $L$, respectively, and $m \ll n$.

 The approximation error is controlled by the omitted high-frequency components. For a connected graph with eigenvalues ordered as $0 = \lambda_1 < \lambda_2 \leq \cdots \leq \lambda_n$, using the spectral decomposition $e^{-tL} = \sum_{k=1}^n e^{-t\lambda_k} \phi_k \phi_k^T$, the error term becomes:
\[
\left\|e^{-tL}u - \sum_{k=1}^m e^{-t\lambda_k} \langle u, \phi_k\rangle \phi_k\right\|_2^2 = \left\|\sum_{k=m+1}^n e^{-t\lambda_k} \langle u, \phi_k\rangle \phi_k\right\|_2^2 
\]
\[= \sum_{k=m+1}^n e^{-2t\lambda_k} |\langle u, \phi_k\rangle|^2 \leq e^{-2t\lambda_{m+1}} \sum_{k=m+1}^n |\langle u, \phi_k\rangle|^2 \leq e^{-2t\lambda_{m+1}} \|u\|_2^2
\]
where the last inequality uses Parseval's identity and the orthonormality of eigenvectors.

 The quality of the approximation depends critically on the spectral gap between consecutive eigenvalues. If there is a significant gap $\lambda_{m+1} - \lambda_m$, then truncating at mode $m$ provides a natural separation between retained and discarded frequencies. This is particularly relevant for graphs with community structure, where spectral gaps often indicate meaningful clustering~\cite{von2007tutorial}.

 The error bound reveals several important characteristics:  For small $t$, the bound approaches $\|u\|_2$, indicating that truncation may not be effective for short-time diffusion.  For moderate $t \approx 1/\lambda_{m+1}$, the error becomes $e^{-1}\|u\|_2 \approx 0.37\|u\|_2$, providing a natural timescale. For large $t$,  the error decays exponentially, making truncation highly effective for long-time diffusion.

\subsubsection{Chebyshev Polynomial Approximation}

The direct computation of the matrix exponential $e^{-tL}$ can be computationally prohibitive for large graphs, as it requires, for a graph with $n$ nodes, $O(n^3)$ operations to perform eigendecomposition and $O(n^2)$ storage for the resulting dense matrix $H_t$~\cite{moler2003nineteen}. 

 The matrix exponential can be approximated using Chebyshev polynomials, which provides an efficient tool to avoid full eigendecomposition while maintaining good accuracy for moderate values of $t$~\cite{mason2002chebyshev, trefethen2000spectral}
The Chebyshev polynomials of the first kind, $T_k(x)$, are defined by the recurrence relation~\cite{mason2002chebyshev}:
\begin{equation*}
T_{k+1}(x) = 2xT_k(x) - T_{k-1}(x)
\end{equation*}
with $T_0(x) = 1$ and $T_1(x) = x$. These polynomials are orthogonal on the interval $[-1, 1]$ with respect to the $L^2$ metric with weight $(1 - x^2)^{-1/2}$.

To describe the application of Chebyshev polynomials to the matrix $L$, first consider that the eigenvalues lie in the interval $[0, \lambda_{\max}]$, where $\lambda_{\max}$ is the largest eigenvalue. We map this interval to $[-1, 1]$ by defining:
\begin{equation*}
T_k^*(x) = T_k\left(\frac{2x}{\lambda_{\max}} - 1\right).
\end{equation*}

 We approximate $e^{-tx}$ over $[0, \lambda_{\max}]$ using the Chebyshev series:
\begin{equation*}
p_m(x) = \sum_{k=0}^m c_k T_k^*(x)
\end{equation*}
where the coefficients $c_k$ are computed using the Chebyshev series expansion~\cite{trefethen2000spectral}:
\begin{align*}
c_0 = \frac{1}{\pi} \int_{-1}^1 e^{-t \frac{\lambda_{\max}(x+1)}{2}} \frac{dx}{\sqrt{1-x^2}} \quad 
c_k = \frac{2}{\pi} \int_{-1}^1 e^{-t \frac{\lambda_{\max}(x+1)}{2}} T_k(x) \frac{dx}{\sqrt{1-x^2}}, \quad k \geq 1
\end{align*}
These integrals can be numerically approximated to high accuracy using Gauss-Chebyshev quadrature or other numerical integration techniques~\cite{higham2008functions}.

 The polynomial approximation $p_m(L)$ is given by:
\begin{equation*}
p_m(L) = \sum_{k=0}^m c_k T_k^*(L) = \sum_{k=0}^m c_k T_k\left(\frac{2L}{\lambda_{\max}} - I\right)
\end{equation*}
where $I$ is the identity matrix. The key advantage as far as memory usage is that $T_k^*(L)$ can be computed recursively~\cite{defferrard2016convolutional}: 
\[
T_0^*(L) = I, \quad T_1^*(L) = \frac{2L}{\lambda_{\max}} - I, \quad T_{k+1}^*(L) = 2\left(\frac{2L}{\lambda_{\max}} - I\right) T_k^*(L) - T_{k-1}^*(L).
\]
The approximation error is bounded by the truncation error of the Chebyshev series~\cite{trefethen2000spectral}. For a function $f(x) = e^{-tx}$ that is analytic in a neighborhood of $[-1, 1]$, this error satisfies:
\begin{equation*}
\|e^{-tL}u - p_m(L)u\|_2 \leq C \cdot \rho^{-m} \|u\|_2
\end{equation*}
where $\rho > 1$ is defined based on the area of the Bernstein ellipse in the complex plane and $C$ is a constant. For the exponential function, this typically gives exponential convergence in $m$.

\section{Fractional Heat Kernel}

In this section, we consider the fractional Laplacian \( \mathbf{L}^s \), with parameter \( s > 0 \), which generalizes the Laplacian towards rougher and more memory-persistent dynamics. The application of this operator to learning introduces the potential of some advantageous properties with respect to oversmoothing tradeoffs.

\subsection{Fractional Laplacian and Heat Kernel Definitions}
This subsection defines the fractional Laplacian \( \mathbf{L}^s \) and fractional heat kernel \( e^{-t\mathbf{L}^s} \). The operator \( \mathbf{L}^s \) is self-adjoint and positive semi-definite on \( \ell^2(V) \), that is:
\[
\langle u, \mathbf{L}^s v \rangle = \langle \mathbf{L}^s u, v \rangle, \quad \langle u, \mathbf{L}^s u \rangle \geq 0.
\]
Given the spectral decomposition \( \mathbf{L} = \sum_{k=1}^n \lambda_k \phi_k \phi_k^{\top} \) from Section 2, the fractional power is~\cite{kato1995perturbation,samko1993fractional}:
\[
\mathbf{L}^s = \sum_{k=1}^n \lambda_k^s \phi_k \phi_k^{\top},
\]
which ensures that \( \mathbf{L}^s \phi_k = \lambda_k^s \phi_k \) for each eigenvector \( \phi_k \). The fractional heat kernel admits the spectral representation:
\[
(e^{-t \mathbf{L}^s})_{ij} = \sum_{k=1}^n e^{-t \lambda_k^s} \phi_k(x_i) \phi_k(x_j).
\]
In addition, the fractional Laplacian defines fractional Sobolev spaces on graphs. For \( s \in (0,1) \), the fractional Sobolev space \( H^s(V) \) is defined as:
\[
H^s(V) = \left\{ u \in \ell^2(V) : \| \mathbf{L}^{s/2} u \|_{\ell^2} < \infty \right\},
\]
equipped with the norm \( \|u\|_{H^s} = \|u\|_{\ell^2} + \| \mathbf{L}^{s/2} u \|_{\ell^2} \).

\subsection{Properties and Regularization}
This subsection examines the properties of \( \mathbf{L}^s \) and \( e^{-t\mathbf{L}^s} \), focusing on their regularization and graph learning implications. The quadratic form associated with \( \mathbf{L}^s \) defines the fractional Dirichlet energy:
\[
E_s(u) = \langle u, \mathbf{L}^s u \rangle = \sum_{i=1}^n \lambda_i^s |\langle u, \phi_i \rangle|^2.
\]
The solution of the fractional heat equation:
\[
\frac{\partial u}{\partial t} + \mathbf{L}^s u = 0, \quad u(\cdot, 0) = u_0,
\]
minimizes the energy functional:
\[
E(u) = \frac{1}{2} \|u - u_0\|_{\ell^2}^2 + \frac{t}{2} E_s(u).
\]
This variational principle demonstrates that fractional diffusion offers \( H^s \)-regularization, interpolating between different levels of smoothing. These properties enable applications in graph learning. Unlike the standard heat kernel, fractional powers enable modeling of anomalous diffusion patterns that can simultaneously capture both local clustering and long-range dependencies~\cite{klicpera2019diffusion}. The fractional parameter \( s \) provides an additional hyperparameter for controlling the smoothness enforcement in semi-supervised learning tasks. The non-local nature of fractional diffusion can provide enhanced robustness to local perturbations and noise in graph structure~\cite{berberidis2019adaptive}. With decreasing \( s < 1 \), \( e^{-t\lambda_k^s} \) decays at a slower rate, preserving more high-frequency information for a given \( t \). Moreover, information spreads further across the graph compared to classical diffusion. Subdiffusion yields persistent memory within the nodes. The parameter \( s \) can be considered to tune the balance between local and global interactions.

\subsection{Computational Methods and Error Analysis}
This subsection presents methods to approximate \( e^{-t\mathbf{L}^s} \) and their error bounds. The approximation quality of the fractional heat kernel operator \( e^{-t\mathbf{L}^s} \) can be analyzed through its spectral properties. The approximation error is controlled by the omitted high-frequency modes:
\[
\left\| e^{-t\mathbf{L}^s} u - \sum_{k=1}^m e^{-t\lambda_k^s} \langle u, \phi_k \rangle \phi_k \right\|_2 \leq e^{-2t\lambda_{m+1}^s} \|u\|_2,
\]
assuming eigenvalues are ordered as \( 0 = \lambda_1 < \lambda_2 \leq \cdots \leq \lambda_n \). For \( s < 1 \), convergence is slower than the classical case, but this trade-off enables better preservation of multiscale graph structure and ameliorates potential over-smoothing. For the fractional heat kernel, we have:
\[
\left\| e^{-(t-\tau)\mathbf{L}^s} - e^{-(t-r)\mathbf{L}^s} \right\| \leq C |\tau - r|^{\gamma},
\]
for some \( \gamma > 0 \) depending on the spectral properties of \( \mathbf{L}^s \). This regularity facilitates stability in the propagation. The Chebyshev polynomial method from Section 2 can be adapted for \( e^{-t\mathbf{L}^s} \) by replacing \( \lambda_k \) with \( \lambda_k^s \), offering exponential convergence for large graphs~\cite{trefethen2000spectral,defferrard2016convolutional}. For the special case \( s = 1/2 \), computing \( e^{-t\sqrt{\mathbf{L}}} u \) can be efficiently performed using the Bochner subordination formula~\cite{berg1984harmonic,schilling2012bernstein}:
\[
e^{-t\sqrt{\mathbf{L}}} u = \frac{t}{2\sqrt{\pi}} \int_0^\infty \tau^{-3/2} e^{-\frac{t^2}{4\tau}} e^{-\tau \mathbf{L}} u \, d\tau.
\]
This leverages standard heat kernel computations \( e^{-\tau \mathbf{L}} \), enhancing efficiency for specific fractional powers.

 \section{Integrating the Heat Kernel in Graph Neural Networks}

The integration of heat kernel diffusion into GNNs represents a paradigmatic shift in the forward pass operation in GNNs from local message passing to global, multiscale information propagation. This section demonstrates how heat kernels can enhance existing GNN architectures while preserving their theoretical guarantees.

\subsection{Heat Kernel in Graph Isomorphism Networks}
This subsection integrates heat kernel diffusion into Graph Isomorphism Networks (GINs), enhancing their discriminative power. Graph Isomorphism Networks (GIN) are known to reach the maximal discriminative power among GNNs by meeting the conditions of the Weisfeiler–Leman isomorphism test~\cite{xu2019how}. In this work, we enhance GINs by incorporating heat kernel diffusion. All the while, this incorporation maintains its ability to distinguish non-isomorphic graphs.

A standard GIN layer computes node embeddings through:
\[
h_v^{(l+1)} = \text{MLP}^{(l)}\left((1+\epsilon^{(l)}) h_v^{(l)} + \sum_{u\in N(v)} h_u^{(l)}\right),
\]
where \( \epsilon^{(l)} \) is a learnable parameter, \( N(v) \) denotes the neighbors of node \( v \), and \( \text{MLP}^{(l)} \) is an activation function at layer \( l \) in a multi-layer perceptron. We extend the aggregation to incorporate multi-hop relationships through heat kernel weights:
\[
h_v^{(l+1)} = \text{MLP}^{(l)}\left((1 + \epsilon^{(l)}) h_v^{(l)} + \sum_{u \in V} {H^s_t}^{(l)}(v,u) h_u^{(l)}\right),
\]
where \( {H_t^s}^{(l)}(v,u) = (e^{-t^{(l)}\mathbf{L}^s})_{vu} \) is the fractional heat kernel weight, and \( t^{(l)} \) is a layer-specific diffusion time that can be learned or fixed.

To improve computational efficiency, we can threshold the heat kernel weights:
\[
\tilde{H}_t(v,u) = \begin{cases}
H_t(v,u) & \text{if } H_t(v,u) > \epsilon \\
0 & \text{otherwise}
\end{cases}
\]
This sparsification reduces computational complexity while preserving the most significant multi-hop connections.

To automatically determine appropriate diffusion scales, we make \( t^{(l)} \) learnable:
\[
t^{(l)} = \text{softplus}(\tilde{t}^{(l)}) = \log(1 + e^{\tilde{t}^{(l)}}),
\]
where \( \tilde{t}^{(l)} \) is the raw learnable parameter. The softplus activation ensures \( t^{(l)} > 0 \) and provides smooth gradients.

\subsection{Heat Kernel in Graph Convolutional Networks}
This subsection extends heat kernel diffusion to Graph Convolutional Networks (GCNs), replacing polynomial filters with exponential smoothing. Graph Convolutional Networks (GCNs) perform spectral filtering through polynomial approximations of the graph Laplacian~\cite{KipfWelling2017GCN}. Heat kernel integration is known to provide a more accurate approach to multiscale filtering.

A standard GCN layer applies the transformation:
\[
h^{(l+1)} = \sigma\left(\tilde{D}^{-1/2} \tilde{A} \tilde{D}^{-1/2} h^{(l)} W^{(l)}\right),
\]
where \( \tilde{A} \) is the adjacency matrix with self-loops, \( \tilde{D} \) is the corresponding degree matrix, and \( W^{(l)} \) is the learnable weight matrix.

Consider the case wherein the normalized adjacency matrix is replaced with the fractional heat kernel:
\[
h^{(l+1)} = \sigma\left(e^{-t^{(l)}\mathbf{L}^s} h^{(l)} W^{(l)}\right),
\]
where \( t^{(l)} \) controls the diffusion scale.

\subsection{Multi-Scale Heat Kernel Aggregation}
This subsection introduces multi-scale heat kernel aggregation to capture information across different diffusion scales. To capture information at multiple scales simultaneously, we can use parallel diffusion channels:
\[
h^{(l+1)} = \sigma\left(\sum_{i=1}^k \alpha_i^{(l)} e^{-t_i^{(l)}\mathbf{L}^s} h^{(l)} W_i^{(l)}\right),
\]
where \( \{t_i^{(l)}\}_{i=1}^k \) are different diffusion times, \( \{W_i^{(l)}\}_{i=1}^k \) are corresponding weight matrices, and \( \{\alpha_i^{(l)}\}_{i=1}^k \) are learnable attention weights satisfying \( \sum_i \alpha_i^{(l)} = 1 \).

The heat kernel GCN acts as a low-pass filter with exponential decay: Filter response at frequency \( \lambda_k \): \( e^{-t\lambda_k} \). This provides a smoother frequency response compared to polynomial filters used in standard GCNs, reducing over-fitting to high-frequency noise while preserving important low-frequency structural information~\cite{defferrard2016convolutional}. This approach enhances GNN performance by balancing local and global information.

\section{Proposed Heat Kernel Diffusion Framework}

In this Section, we motivate and provide algorithmic details for a Heat Kernel propagation method for semi-supervised graph learning. We first present a continuous model providing the intuition of the mechanism for the procedure. Next, we detail the specific algorithms that we implement and investigate their performance in the sequel.

\subsection{Forward Propagation with Continuous Supervision}

Let $V = {\{x_1, \dots, x_n\}}$ represent the set of vertices in a graph. We assume that a subset of vertices, $V_l = {\{x_1, \dots, x_l\}} \subset V$, have labels and forms the training set $(x_i, y_i)_{i=1}^{l}$. In graph-based semi-supervised learning, the goal is to extend the labels from $V_l$ to the unlabeled set $V_u = {\{x_{l+1}, \dots, x_n\}}=V\setminus V_l$.
Let $U^0 \in \mathbb{R}^{n\times c}$ be the initial labeled matrix with rows $U^{0}_{i}$, so each row represents node features/labels and $c$ is the number of classes.
We write 
\[
U^{0}=
\begin{bmatrix}
U^{0}_l  \\
\mathbf{0}_{(n - l) \times c}
\end{bmatrix},
\]
where $U^{0}_l$  corresponds to  samples    $V_l$ that have   labels.

  We consider the following system of differential equations:
\begin{equation}\label{eq:heat_diffusion}
\frac{dU}{dt} = -L^s U + F, \quad U(0) = U_0, \quad 0< s  \le 1.
\end{equation}
where $L^s$ denotes the fractional power of the normalized graph Laplacian with $0 < s \leq 1$, enabling flexible control over diffusion behavior.
The   source  term $S$  is defined as:
\[
F = 
\begin{bmatrix}
U^{0}_l - \mathds{1}_l \bar{U}^0 \\
\mathbf{0}_{(n - l) \times c}
\end{bmatrix},
\]
where \( \bar{U}^0 = \frac{1}{l} \sum\limits_{i=1}^l U^{0}_{i} \in \mathbb{R}^{1 \times c} \) is the mean label vector across the labeled set.

The exact solution of equation~\eqref{eq:heat_diffusion} is given by:
\begin{equation}
U(t) = e^{-tL^s} U_0 + \int_0^t e^{-(t-\tau)L^s} F \, d\tau=e^{-tL^s} U_0 + \int_0^t e^{-\tau L^s} F \, d\tau .
\end{equation}

For the case where $F$ is time-independent, we can evaluate the integral analytically.  For invertible matrices $A$, one has the identity, 
\begin{equation*}
\int_0^t e^{-\tau A} d\tau = A^{-1}(I - e^{-tA}).
\end{equation*}
 However, since $L^s$ has the eigenvalue $\lambda_1 = 0$, we instead use the series expansion:
\begin{equation*}
e^{-\tau L^s} = \sum_{k=0}^\infty \frac{(-\tau L^s)^k}{k!}.
\end{equation*}
Integrating term by term:
\begin{equation}\label{eq:series_integral}
\int_0^t e^{-\tau L^s} d\tau = t \sum_{k=0}^\infty \frac{(-tL^s)^k}{(k+1)!} = t \cdot h(tL^s)
\end{equation}
where we define:
\begin{equation*}
h(x) = \sum_{k=0}^\infty \frac{(-x)^k}{(k+1)!} = \frac{1-e^{-x}}{x}
\end{equation*}
with the convention that $h(0) = 1$ by continuity. Therefore, the complete solution becomes:
\begin{equation*}
U(t) = e^{-tL^s} U_0 + t \cdot h(tL^s) F.
\end{equation*}

\subsection{Theoretical Properties}
This subsection establishes key theoretical properties of the proposed diffusion model.

\begin{theorem} 
The solution $U(t)$ of the heat kernel diffusion equation~\eqref{eq:heat_diffusion} preserves the total mass for each feature dimension.
\end{theorem}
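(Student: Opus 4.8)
The plan is to track the column sums of $U(t)$ directly, since the \emph{total mass of feature dimension} $j$ is $\sum_{i=1}^n U_{ij}(t)$, the $j$-th entry of the row vector $M(t) := \mathbf{1}^\top U(t) \in \mathbb{R}^{1\times c}$. It therefore suffices to show that $M(t)$ is constant in $t$. The cleanest route is differential: using the governing equation~\eqref{eq:heat_diffusion} I would compute $\frac{d}{dt}M(t) = \mathbf{1}^\top \frac{dU}{dt} = -\mathbf{1}^\top L^s U(t) + \mathbf{1}^\top F$, and then argue that \emph{both} terms on the right-hand side vanish identically.

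The first term vanishes because the constant vector is annihilated by $L^s$ from the left. Concretely, the zero eigenvalue $\lambda_1 = 0$ of $L$ carries the constant eigenvector $\phi_1 \propto \mathbf{1}$, and since $0 < s \le 1$ we have $\lambda_1^s = 0$; hence from the spectral representation $L^s = \sum_k \lambda_k^s \phi_k \phi_k^\top$ one gets $L^s \mathbf{1} = 0$, and by symmetry of $L^s$ this yields $\mathbf{1}^\top L^s = (L^s \mathbf{1})^\top = 0$, so $\mathbf{1}^\top L^s U(t) = 0$ for every $t$. The only place the fractional exponent enters is exactly $\lambda_1^s = 0$, i.e. raising to the power $s$ leaves the kernel of the Laplacian untouched; this is where $0<s\le 1$ is used.

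The second term vanishes by the construction of the source $F$. Using that the lower block of $F$ is zero, I would compute $\mathbf{1}^\top F = \sum_{i=1}^l \bigl(U^{0}_{i} - \bar{U}^0\bigr) = \sum_{i=1}^l U^{0}_{i} - l\,\bar{U}^0 = 0$, since $\bar{U}^0 = \tfrac1l \sum_{i=1}^l U^{0}_{i}$ is precisely the mean of the labeled rows. Thus $F$ is mean-centered over the labeled set by design, which is exactly the feature that makes it a mass-neutral forcing term. Combining the two vanishing contributions gives $M'(t) \equiv 0$, hence $M(t) = M(0) = \mathbf{1}^\top U_0$ for all $t$, which is the claim.

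As a cross-check I would also substitute the closed-form solution $U(t) = e^{-tL^s} U_0 + t\, h(tL^s) F$ derived above and left-multiply by $\mathbf{1}^\top$: since $\mathbf{1}^\top (L^s)^k = 0$ for $k \ge 1$, both series $\mathbf{1}^\top e^{-tL^s}$ and $\mathbf{1}^\top h(tL^s)$ collapse to their $k=0$ term $\mathbf{1}^\top$, giving $\mathbf{1}^\top U(t) = \mathbf{1}^\top U_0 + t\,\mathbf{1}^\top F = \mathbf{1}^\top U_0$, in agreement with the differential argument. I do not anticipate a genuine obstacle; the only point demanding care is the choice of Laplacian, as the argument needs the constant vector (rather than $D^{1/2}\mathbf{1}$) to be annihilated from the left. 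This holds for the symmetric combinatorial Laplacian $L = D - W$ used throughout the paper; were a strictly normalized Laplacian intended, one would reinterpret ``mass'' with the corresponding weight, but the mean-centering of $F$ would play the identical neutralizing role.
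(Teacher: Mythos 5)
Your proposal is correct and follows essentially the same route as the paper's proof: differentiate $\mathbf{1}^\top U(t)$, kill the first term via $L^s\mathbf{1}=0$ (since $\ker L^s=\ker L$ contains $\mathbf{1}$ and $L^s$ is symmetric), and kill the second via the mean-centering of $F$ over the labeled block. Your version is in fact slightly more careful than the paper's on two points --- you use the block structure of $F$ explicitly to compute $\mathbf{1}^\top F=\sum_{i=1}^l (U^0_i-\bar U^0)=0$, and you correctly flag that the argument requires a Laplacian annihilating the constant vector (combinatorial or random-walk), a caveat the paper elides when it calls $L$ ``normalized'' --- but these are refinements of the same argument, not a different one.
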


\begin{proof}
We need to show that $\mathbf{1}^T U(t) = \mathbf{1}^T U_0$ for all $t \geq 0$. Taking the time derivative:
\begin{equation}
\frac{d}{dt}(\mathbf{1}^T U(t)) = \mathbf{1}^T \frac{dU}{dt} = \mathbf{1}^T(-L^s U + F)
\end{equation}
Since $L$ is the normalized graph Laplacian, we have $L\mathbf{1} = 0$, and therefore $L^s \mathbf{1} = 0$ for any $\alpha > 0$. This gives us:
\begin{equation*}
\mathbf{1}^T L^s = (L^s \mathbf{1})^T = 0
\end{equation*}

For the source term, by construction:
\begin{equation*}
\mathbf{1}^T F = \mathbf{1}^T(U_0 - \bar{U}_0) = \mathbf{1}^T U_0 - \mathbf{1}^T \bar{U}_0 = \mathbf{1}^T U_0 - \mathbf{1}^T U_0 = 0
\end{equation*}
Therefore:
\begin{equation*}
\frac{d}{dt}(\mathbf{1}^T U(t)) = 0 + 0 = 0
\end{equation*}
This implies $\mathbf{1}^T U(t) = \mathbf{1}^T U_0$ for all $t \geq 0$.
\end{proof}

\begin{definition}
Define the orthogonal projector \(\Pi\) onto $\ker(L^s)$ as follows:

\begin{itemize}
  \item  For combinatorial Laplacian $L=D-W$ (or  random-walk $L_{\mathrm{rw}}=I-D^{-1}W$) with 
  \[
  \ker(L)=\mathrm{span}\{\mathbf{1}\},\qquad
  \Pi \;=\; \frac{1}{n}\mathbf{1}\mathbf{1}^\top,
  \]
  so that \(\Pi u = \bar u\,\mathbf{1}\) with \(\bar u = \tfrac{1}{n}\sum_{i=1}^n u_i\).
  \item  Symmetric normalized Laplacian $L_{\mathrm{sym}}=I-D^{-1/2}WD^{-1/2}$:
  \[
  \ker(L_{\mathrm{sym}})=\mathrm{span}\{D^{1/2}\mathbf{1}\},\qquad
  \Pi \;=\; \frac{D^{1/2}\mathbf{1}\,\mathbf{1}^\top D^{1/2}}{\mathbf{1}^\top D \mathbf{1}}.
  \]
  Equivalently, for any $u$, $\Pi u = \frac{\mathbf{1}^\top D^{1/2} u}{\mathbf{1}^\top D \mathbf{1}}\,D^{1/2}\mathbf{1}$ (degree-weighted constant).
\end{itemize}
Note that for any $\alpha>0$, $\ker(L^s)=\ker(L)$, so the projector $\Pi$ can be defined using $L$.

\end{definition}

\begin{theorem} 
Let $G$ be a connected graph with $n$ nodes, and let $L$ be a graph Laplacian.   Consider
\begin{equation}\label{eq:heat-source-2}
\frac{d}{dt}U(t) \;=\; -\,L^s U(t) \;+\; F,\qquad U(0)=U_0,
\end{equation}
with constant source $F\in\mathbb{R}^{n\times k}$. Denote by $(L^s)^\dagger$ the Moore--Penrose pseudoinverse of $L^s$. Then:
\begin{enumerate}
\item[(i)] The unique mild solution of \eqref{eq:heat-source-2} is
\[
U(t) \;=\; e^{-tL^s}U_0 \;+\; \int_0^t e^{-(t-\tau)L^s}\, F\,d\tau
\;=\; e^{-tL^s}U_0 \;+\; t\,h(tL^s)F,
\]
where \(h(x)=(1-e^{-x})/x\) (with \(h(0)=1\)).
\item[(ii)] The solution $U(t)$ remains bounded as $t\to\infty$ if and only if $\Pi F=0$. In that case,
\[
\lim_{t\to\infty} U(t) \;=\; \Pi U_0 \;+\; (L^s)^\dagger F,
\]
and this limit is the minimum-norm solution of \(L^s U=F\) whose projection onto $\ker(L^s)$ equals \(\Pi U_0\).

\end{enumerate}
\end{theorem}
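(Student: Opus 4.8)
The plan is to establish (i) by the variation-of-parameters (Duhamel) formula for linear inhomogeneous ODEs and (ii) by diagonalizing in the eigenbasis of $L^s$ and tracking the long-time asymptotics of each spectral mode. For (i), I would verify directly that the proposed $U(t)$ solves \eqref{eq:heat-source-2}: differentiating $U(t) = e^{-tL^s}U_0 + \int_0^t e^{-(t-\tau)L^s}F\,d\tau$ gives $U'(t) = -L^s e^{-tL^s}U_0 + F - L^s\int_0^t e^{-(t-\tau)L^s}F\,d\tau = -L^sU(t) + F$, and $U(0)=U_0$ is immediate. Uniqueness follows since $U\mapsto -L^sU+F$ is globally Lipschitz, so Picard--Lindel\"of applies. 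The second equality comes from the substitution $\sigma = t-\tau$, which rewrites the integral as $\int_0^t e^{-\sigma L^s}F\,d\sigma$, together with the term-by-term integration \eqref{eq:series_integral} identifying $\int_0^t e^{-\sigma L^s}\,d\sigma = t\,h(tL^s)$.

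For (ii), the central tool is the spectral decomposition $L^s = \sum_{j=1}^n \lambda_j^s\phi_j\phi_j^\top$. Projecting the closed-form solution onto each eigenmode, the $\phi_j$-component of $U(t)$ is $e^{-t\lambda_j^s}(\phi_j^\top U_0) + t\,h(t\lambda_j^s)(\phi_j^\top F)$ (the projections being rows in $\mathbb{R}^{1\times k}$, so the analysis proceeds column by column). I would then split the modes by whether $\lambda_j=0$ or $\lambda_j>0$. For the connected graph, the kernel mode $\lambda_1=0$ gives $e^{0}=1$ and $t\,h(0)=t$, so this component equals $(\phi_1^\top U_0) + t(\phi_1^\top F)$ and grows linearly unless $\phi_1^\top F = 0$; since $\Pi=\phi_1\phi_1^\top$ is the orthogonal projector onto $\ker(L^s)$, this is exactly $\Pi F = 0$. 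For $\lambda_j>0$ one has $e^{-t\lambda_j^s}\to 0$ and $t\,h(t\lambda_j^s) = (1-e^{-t\lambda_j^s})/\lambda_j^s \to \lambda_j^{-s}$, so the component converges to $\lambda_j^{-s}(\phi_j^\top F)$.

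Summing the limiting modes and recognizing $\sum_{\lambda_j>0}\lambda_j^{-s}\phi_j\phi_j^\top = (L^s)^\dagger$ yields $\lim_{t\to\infty}U(t) = \Pi U_0 + (L^s)^\dagger F$. To finish, I would check that this limit solves $L^sU=F$ (using $L^s\Pi U_0 = 0$ and $L^s(L^s)^\dagger F = F-\Pi F = F$), that its projection onto $\ker(L^s)$ is $\Pi U_0$ (because $(L^s)^\dagger F\in\ker(L^s)^\perp$), and that the orthogonal splitting into $\Pi U_0\in\ker(L^s)$ and $(L^s)^\dagger F\in\ker(L^s)^\perp$ exhibits $(L^s)^\dagger F$ as the minimum-norm solution of $L^sU=F$. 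The main obstacle is the careful treatment of the resonant kernel mode in the forced problem: one must see that the forcing aligned with the zero eigenvalue produces the linear-in-$t$ term, making $\Pi F=0$ both necessary and sufficient for boundedness, while confirming that every positive mode relaxes precisely to its steady state $\lambda_j^{-s}(\phi_j^\top F)$.
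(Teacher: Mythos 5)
Your proposal is correct and follows essentially the same route as the paper: Duhamel/variation-of-constants for (i), and the spectral decomposition of $L^s$ for (ii), where your mode-by-mode analysis is exactly the paper's splitting via $\Pi$ and $I-\Pi$ together with the diagonal spectral-calculus computation of $\int_0^t e^{-\sigma L^s}(I-\Pi)F\,d\sigma \to (L^s)^\dagger F$. If anything, you are slightly more explicit than the paper on the necessity of $\Pi F=0$ (via the linear-in-$t$ kernel mode) and on verifying the minimum-norm characterization, which the paper delegates to generic pseudoinverse properties.
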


\begin{proof}
Let $L^s=\Phi\Lambda\Phi^\top$ be an orthonormal eigendecomposition with
\[
\Lambda=\mathrm{diag}(0,\lambda_2^\alpha,\dots,\lambda_n^\alpha),\qquad
\Phi=[\varphi_1,\varphi_2,\dots,\varphi_n],
\]
where $\varphi_1$ spans $\ker(L^s)$ and the remaining $\varphi_j$ span its orthogonal complement.  For the combinatorial (or random-walk) Laplacian, we may take
\[
\varphi_1=\frac{\mathbf{1}}{\sqrt{n}},\qquad \Pi=\varphi_1\varphi_1^\top=\frac{1}{n}\mathbf{1}\mathbf{1}^\top.
\]
For the symmetric normalized Laplacian, choose the unit vector
\[
\varphi_1=\frac{D^{1/2}\mathbf{1}}{\|D^{1/2}\mathbf{1}\|_2}=\frac{D^{1/2}\mathbf{1}}{\sqrt{\mathbf{1}^\top D \mathbf{1}}},
\quad\text{so}\quad
\Pi=\varphi_1\varphi_1^\top=\frac{D^{1/2}\mathbf{1}\,\mathbf{1}^\top D^{1/2}}{\mathbf{1}^\top D \mathbf{1}}.
\]

\emph{(i) } Since $L^s$ is symmetric positive semidefinite, $e^{-tL^s}$ is a contraction semigroup and the variation-of-constants formula yields
\[
U(t)=e^{-tL^s}U_0+\int_0^t e^{-(t-\tau)L^s}F\,d\tau.
\]
Using $\int_0^t e^{-sL^s}ds = t\,h(tL^s)$ with $h(x)=(1-e^{-x})/x$ gives the stated form.

\emph{(ii) Boundedness and limit when $\Pi S=0$.} Decompose
\[
U_0=\Pi U_0+(I-\Pi)U_0,\qquad F=\Pi F+(I-\Pi)F.
\]
Because $e^{-tL^s}\Pi=\Pi$ and $e^{-tL^s}(I-\Pi)\to 0$ as $t\to\infty$, we can write
\[
U(t)=\Pi U_0 + e^{-tL^s}(I-\Pi)U_0 + t\,\Pi F + \int_0^t e^{-sL^s}(I-\Pi)F\,ds.
\]
If $\Pi F=0$, then the $t\,\Pi F$ term vanishes. On $\mathrm{range}(I-\Pi)$ the operator $L^s$ is invertible, and by spectral calculus
\[
\int_0^t e^{-sL^s}(I-\Pi)F\,ds
= \Phi\,\mathrm{diag}\!\Big(0,\frac{1-e^{-t\lambda_2^\alpha}}{\lambda_2^\alpha},\dots,\frac{1-e^{-t\lambda_n^\alpha}}{\lambda_n^\alpha}\Big)\,\Phi^\top F
\longrightarrow (L^s)^\dagger F
\]
as $t\to\infty$. Moreover $e^{-tL^s}(I-\Pi)U_0\to 0$, so
\[
\lim_{t\to\infty} U(t)=\Pi U_0 + (L^s)^\dagger F.
\]
This vector is the minimum-norm solution of $L^s U=F$ with the constraint $\Pi U=\Pi U_0$ by properties of the pseudoinverse.
\end{proof}

\subsection{Time Discretization Schemes}

For numerical implementation, we present several time discretization methods with different stability and accuracy properties. The explicit Forward Euler scheme provides:
\begin{equation*}
U_{k+1} = (I - \Delta t L^s) U_k + \Delta t \, F.
\end{equation*}
For stability, we require $\Delta t < \frac{2}{\lambda_{\max}^\alpha}$ where $\lambda_{\max}^\alpha$ is the largest eigenvalue of $L^s$.

The implicit Backward Euler scheme offers better stability:
\begin{equation*}
(I + \Delta t L^s) U_{k+1} = U_k + \Delta t \, F.
\end{equation*}
This requires solving a linear system at each time step,  but is unconditionally stable.

Using the analytical solution structure, we can develop a higher-order method:
\begin{equation*}
U_{k+1} = e^{-\Delta t L^s} U_k + \Delta t \cdot h(\Delta t L^s) F_k.
\end{equation*}
This method is exact for a constant $F$ and provides superior accuracy with larger time steps.

For high-accuracy applications, the Runge-Kutta 4 (RK4) method provides fourth-order convergence:
\begin{align*}
k_1 &= -L^s U_k + F \\
k_2 &= -L^s \left(U_k + \frac{\Delta t}{2} k_1\right) + F \\
k_3 &= -L^s \left(U_k + \frac{\Delta t}{2} k_2\right) + F \\
k_4 &= -L^s \left(U_k + \Delta t k_3\right) + F \\
U_{k+1} &= U_k + \frac{\Delta t}{6} (k_1 + 2k_2 + 2k_3 + k_4)
\end{align*}

For fractional diffusion with $\alpha < 1$, we extend our framework using the methods developed in previous sections:

  \subsection{Framework for Self Training Fractional Heat Kernel Expansion}

 Self-training enhances semi-supervised learning by iteratively expanding the labeled dataset with high-confidence predictions~\cite{Amini2024SelfTrainingSurvey}.

We start with an initial set of labeled nodes and use  Heat Kernel Propagation to diffuse labels. Then, at each iteration: 
\begin{enumerate}
    \item Update $U$ for a small $\Delta t$:
    \begin{equation}
    U_{k+1} = e^{-\Delta t L} U_k +  t\,  h_k\,  F_k
    \end{equation}
    \item Select high-confidence predictions from $U$.
    \item  Update $F_k$  by including these newly confident nodes as labeled points.
    \item  Repeat the process until convergence.
\end{enumerate}
 Initially, $S$ contains only the original labeled points.  As $U$ evolves, we identify points with high confidence (e.g., a probability of $> 0.9$).  These newly confident points are added to $S$, effectively expanding the labeled set.  This progressively improves accuracy, making propagation more reliable over iterations.
Algorithm 1 implements our heat kernel self-training framework through 
iterative diffusion and confidence-based pseudo-labeling.

Next, we extend the self-training framework by incorporating the fractional heat kernel, allowing for more flexible diffusion through the fractional parameter $\alpha \in (0,1]$.  Since $\lambda_k^\alpha < \lambda_k$ for $\lambda_k > 1$, the term $e^{-t\lambda_k^\alpha}$ decays more slowly, preserving fine-grained structural information.   Information spreads further across the graph compared to classical diffusion, improving connectivity between distant labeled nodes.

We define the dynamic sets and measures that evolve during the self-training process in Table~\ref{tab:quantsinalg}
\begin{table}
    \begin{tabular}{|l|l|}
    \hline \\
    \textbf{Symbol} & \textbf{Definition} \\ \hline\hline
$\mathcal{L}_k$ & Set of  currently labeled nodes at iteration $k$  \\
\hline     $\mathcal{U}_k$ & Set of  unlabeled nodes at iteration $k$, where $\mathcal{U}_k = V \setminus \mathcal{L}_k$ \\
 \hline    $\mathcal{C}_k$ & Set of  high-confidence nodes  selected for pseudo-labeling at iteration $k$
    \\ \hline $\text{conf}_i$ & Confidence score  for node $i$, measuring prediction certainty
     \\ \hline $S_k$ &  Source matrix encoding supervision signal from labeled nodes \\ \hline
    \end{tabular}
    \caption{Quantities refered to in the Algorithms in this Section}\label{tab:quantsinalg}
\end{table}

  For each unlabeled node $i \in \mathcal{U}_k$, we define the confidence score as:
\begin{equation}
\text{conf}_i = \max_j U_{k+1}(i,j) - \frac{1}{c}\sum_{j=1}^c U_{k+1}(i,j)
\end{equation}
The confidence measure $\text{conf}_i$ quantifies prediction certainty by comparing the maximum predicted probability to the uniform baseline, ensuring that only genuinely confident predictions are selected for pseudo-labeling. This measures how much the highest predicted probability exceeds the uniform distribution baseline. The range is $[0, 1-\frac{1}{c}]$ where $c$ is the number of classes.

At each iteration, the high-confidence selection set is given by the following rule:
      \[
\mathcal{C}_k \;\leftarrow\; \{\, i \notin \mathcal{L}_k : \; \text{conf}_i > \theta_k \,\}.
\]
 The entire Algorithm is defined as Algorithm~\ref{alg:heat_kernel_self_training}. 
{\scriptsize
\begin{algorithm}[H]
\caption{Heat Kernel Self-Training}
\label{alg:heat_kernel_self_training}
\begin{algorithmic}[1]
\Require Graph Laplacian $L$, initial labels $U_0 \in \mathbb{R}^{n \times c}$, Initial source $S_0$,   time step $\Delta t > 0$,   confidence threshold $\theta_0 \in (0,1)$, maximum iterations $T_{\max}$
\Ensure Updated label matrix $U \in \mathbb{R}^{n \times c}$

\State \textbf{Initialize:} 
\State $k \leftarrow 0$, $U_k \leftarrow U_0$,  $F_k \leftarrow F_0$, 
\State $\mathcal{L}_k \leftarrow \{i : \|U_{i,:}\|_\infty > 0\}$
\While{$k < T_{\max}$ and not converged}
    \State \textbf{Diffusion Step:}
    \State $U_{k+1} \leftarrow e^{-\Delta t L} U_k + \Delta t \cdot h(\Delta t L) F_k$
    
    \State \textbf{Confidence Assessment:}
    \For{$i = 1$ to $n$}
        \State $\text{conf}_i \leftarrow \max_j U_{k+1}(i,j) - \frac{1}{c}\sum_j U_{k+1}(i,j)$
    \EndFor
         \State \textbf{Source Update:}
    \For{$i \in \mathcal{C}_k$}
        \State $F_{k+1}(i,:) \leftarrow U_{k+1}(i,:) - \frac{1}{l+1}\mathbf{1}^T U_{k+1}$
        \State $\mathcal{L}_{k+1} \leftarrow \mathcal{L}_k \cup \{i\}$
    \EndFor
    
    \State $k \leftarrow k + 1$
\EndWhile

\State \Return $U_k$
\end{algorithmic}
\end{algorithm}
}

 In the fractional setting, we modify the confidence measure to account for the different diffusion characteristics:
\begin{equation}
\text{conf}_i^\alpha(t) = \max_j U_{ij}(t) - \frac{1}{c}\sum_{j=1}^c U_{ij}(t) + \alpha \cdot \text{entropy}(U_i(t))
\end{equation}
where the entropy term $\text{entropy}(U_i(t)) = -\sum_j U_{ij}(t) \log U_{ij}(t)$ captures the certainty of the prediction, weighted by the fractional parameter.

 The source term evolves dynamically based on confidence assessments:
\begin{equation}
F_{k+1}(i,:) = \begin{cases}
F_k(i,:) & \text{if } i \in \mathcal{L}_k \text{ (already labeled)} \\
U_{k+1}(i,:) - \bar{U}_{k+1} & \text{if } i \in \mathcal{C}_k \text{ (newly confident)} \\
0 & \text{if } i \in \mathcal{U}_{k+1} \text{ (still unlabeled)}
\end{cases}
\end{equation}
where $\bar{U}_{k+1} = \frac{1}{|\mathcal{L}_{k+1}|} \sum_{i \in \mathcal{L}_{k+1}} U_{k+1}(i,:)$ is the updated mean label vector.

\subsection{Graph Attention Networks with Heat Kernel Integration}
Graph Neural Networks developed along two main directions: spatial methods that work directly with graph structure, and those that use eigendecompositions. Kipf and Welling~\cite{KipfWelling2017GCN} used the spectral properties of the graph Laplacian to design localized convolution operations. This influential approximation restricts information propagation to nearby neighbors, missing complex multi-hop relationships. Standard graph convolution also suffers from over-smoothing, wherein node representations become indistinguishable after sufficiently many training passes, especially in deep architectures~\cite{li2018deeper, oono2020graph}. To overcome these limitations, attention-based models like Graph Attention Networks (GATs) by Veli{\v c}kovi{\' c} et al.~\cite{velickovic2018graph} introduced adaptive weighing mechanisms. However, GATs still rely primarily on local aggregation and struggle with capturing global graph structure efficiently.

Graph Attention Networks (GATs) process graphs by learning attention coefficients that determine the importance of neighboring nodes~\cite{velickovic2018graph}.   We integrate heat kernel diffusion to enhance GATs with multiscale information propagation.

 Standard GAT computes attention coefficients between connected nodes. We extend this to incorporate heat kernel weights for global attention:
\begin{equation}
e_{ij}^{(k)} = \text{LeakyReLU}\left(a^{(k)T} [W^{(k)} h_i \| W^{(k)} h_j]\right) + \beta \cdot H_t(i,j)
\end{equation}
where $H_t(i,j) = (e^{-tL})_{ij}$ is the heat kernel weight, $\beta > 0$ controls the heat kernel influence, and $\|$ denotes concatenation. The attention coefficients become:
\begin{equation}
\alpha_{ij}^{(k)} = \frac{\exp(e_{ij}^{(k)})}{\sum_{l \in \mathcal{N}_i \cup \mathcal{H}_i} \exp(e_{il}^{(k)})}
\end{equation}
where $\mathcal{N}_i$ are the direct neighbors and $\mathcal{H}_i = \{j : H_t(i,j) > \epsilon\}$ are the heat kernel neighbors above the threshold $\epsilon$. 
 For $K$ attention heads, the node update becomes:
\begin{equation}
h_i^{(l+1)} = \sigma\left(\frac{1}{K} \sum_{k=1}^K \sum_{j \in \mathcal{N}_i \cup \mathcal{H}_i} \alpha_{ij}^{(k)} W^{(k)} h_j^{(l)}\right)
\end{equation}

This formulation allows the model to leverage both local structural patterns (via direct edges) and global diffusion patterns (via heat kernel weights).

After training GATs with heat kernel enhancement, we can leverage the learned embeddings to construct semantically meaningful graphs that capture higher-order relationships.  Given final embeddings $Z = [z_1, \ldots, z_n]^T \in \mathbb{R}^{n \times d}$ from the trained model, we construct enhanced graphs using multiple strategies. Define edge weights using a combination of embedding similarity and original heat kernel weights:
\begin{equation}
w'_{ij} = \alpha \cdot \exp\left(-\frac{\|z_i - z_j\|^2}{2\sigma^2}\right) + (1-\alpha) \cdot H_t(i,j)
\end{equation}
where $\sigma$ is the bandwidth parameter (e.g., median pairwise distance) and $\alpha \in [0,1]$ balances embedding and structural similarity.  Combine multiple similarity measures:
\begin{equation}
w'_{ij} = w_1 S_{\cos}(i,j) + w_2 S_{\text{heat}}(i,j) + w_3 S_{\text{struct}}(i,j)
\end{equation}
where:
\begin{align*}
 \quad  \quad \text{cosine  similarity:}\,  S_{\cos}(i,j) = \frac{z_i^T z_j}{\|z_i\| \|z_j\|},\quad \quad \quad         \quad \text{heat kernel similarity:} \, &  S_{\text{heat}}(i,j) = H_t(i,j)\\ 
 \quad \quad \quad   \text{structural distance similarity :}\,S_{\text{struct}}(i,j) = \exp(-d_G(i,j)).
\end{align*}
and $d_G(i,j)$ is the shortest path distance in the original graph.

Next, we apply fractional  heat kernel diffusion on the refined graph:
\begin{equation}
U'(t) = e^{-tL^{s'}} U_0 + \int_0^t e^{-(t-\tau)L^{s'}}\, F(\tau) d\tau
\end{equation}
where $L'$ is the Laplacian of the refined graph. 
Also, one can identify nodes whose embedding neighborhoods deviate from structural neighborhoods:
\begin{equation}
\text{anomaly}_i = \|\mathcal{N}_{\text{struct}}(i) \triangle \mathcal{N}_{\text{embed}}(i)\|
\end{equation}
where $\triangle$ denotes the symmetric difference between neighborhood sets.
In summary:
\begin{itemize} 
\item Train GAT $\rightarrow$  Extract embeddings $Z$
\item Build cosine similarity graph from $Z$
\item Apply the fractional heat kernel with source
\end{itemize}
To maintain computational efficiency, we use a sparse heat kernel approximation:
\begin{equation}
\tilde{H}_t(i,j) = \begin{cases}
H_t(i,j) & \text{if } H_t(i,j) > \epsilon \text{ or } (i,j) \in E \\
0 & \text{otherwise}
\end{cases}
\end{equation}

To summarize, the steps outlined above are presented in Algorithm~\ref{alg:gat-heat-kernel}.

\begin{algorithm}[htbp]
\caption{GAT-Enhanced Heat Kernel Diffusion}
\label{alg:gat-heat-kernel}
\begin{algorithmic}[1]
\Require Graph $G = (V,W)$, labels $y$, labeled set $\mathcal{L}$, weights $\alpha_1, \alpha_2, \alpha_3$, with $\alpha_1 + \alpha_2 + \alpha_3 = 1$, diffusion time $t$, fractional parameter $\alpha \in (0,1]$, confidence  threshold $\tau$
\Ensure Predicted labels $\hat{y}$

\State \textbf{Phase 1: GAT Feature Learning}
\State Train multi-head GAT on original graph $G$
\State Extract final embeddings $Z = [z_1, \ldots, z_n]^T \in \mathbb{R}^{n \times d'}$
\State Extract attention weights $\bar{\alpha}_{ij} = \frac{1}{H} \sum_{k=1}^{H} \alpha_{ij}^{(k)}$

\State \textbf{Phase 2: Semantic Graph Construction}
\State Compute embedding similarity: $S_{\text{embed}}(i,j) = \frac{z_i^T z_j}{\|z_i\|_2 \|z_j\|_2}$
\State Combine similarities: $w'_{ij} = \alpha_1 S_{\text{embed}}(i,j) + \alpha_2 \bar{\alpha}_{ij} + \alpha_3 W_{ij}$
\State Create refined adjacency: $A'_{ij} = w'_{ij}$ if $w'_{ij} > \tau$, else $0$
\State Construct refined graph $G' = (V, E', A')$ and Laplacian $L'$

\State \textbf{Phase 3: Heat Kernel Diffusion}
\State Initialize $U^0$ with labeled nodes, compute source $F$
\State Apply heat kernel diffusion: $U(t) = e^{-tL'^{\alpha}} U^0 + t \cdot h(tL'^{\alpha}) F$
\While{not converged}
    \State Identify high-confidence predictions and update $F$
    \State Recompute $U(t)$ with updated source
\EndWhile\\
\Return \, 
$\hat{y}_i = \arg\max_j U_{ij}(t)$

\end{algorithmic}
\end{algorithm}

\section{Experimental Evaluation}

This section evaluates the performance of fractional heat kernel schemes on the Two-Moon and Cora datasets, comparing them against standard heat kernel methods and established baselines, with a focus on regimes with significant sparsity of labeled training data.

\subsection{Two-Moon Dataset Evaluation}
This subsection evaluates three fractional diffusion schemes on the Two-Moon dataset. We consider a balanced \textit{Two-Moon} pattern, and we generate a set of 1000 points, with noise level 0.15. We implemented 3 variations of our scheme for the two-moon dataset. Figure \ref{fig:1} illustrates the data set and the initial labeling.

\begin{itemize}
    \item In the first scheme, we use basic fractional diffusion \( U(t) = e^{-t\mathbf{L}^s} U_0 \),
    \item The second scheme is with the Mean-centered source \( S \)
    \[
    U(t) = e^{-t\mathbf{L}^s} D^{-1} F, \quad \text{where} \quad F = U_0 - \text{mean}(U_0).
    \]
    \item Finally \( U(t) = e^{-t\mathbf{L}^s} U_0 + \int_0^t e^{-(t-\tau)\mathbf{L}^s} D^{-1} F \, d\tau \)
\end{itemize}

\begin{figure}[h]
    \includegraphics[scale=0.38]{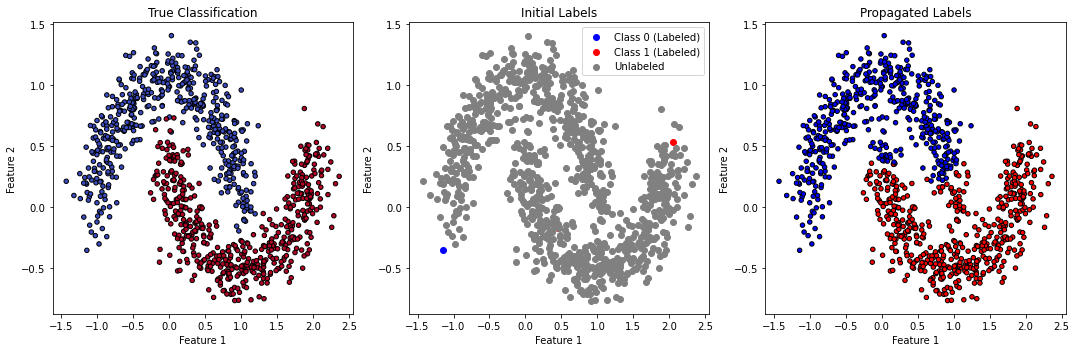}
    \caption{The classification on Two-Moon.}
    \label{fig:1}
\end{figure}

\begin{table}[htbp]
\centering
\caption{Performance Comparison of Three Fractional Diffusion Schemes}
\label{tab:fractional_schemes}
\begin{tabular}{|c|c|c|c|c|}
\hline
\textbf{$s$} & \textbf{Labels} & \textbf{Scheme 1} & \textbf{Scheme 2} & \textbf{Scheme 3} \\
\hline
\multirow{6}{*}{0.2}
& 1 & 0.865 $\pm$ 0.014 & 0.813 $\pm$ 0.020 & 0.888 $\pm$ 0.010 \\
& 2 & 0.925 $\pm$ 0.010 & 0.916 $\pm$ 0.010 & 0.927 $\pm$ 0.007 \\
& 3 & 0.948 $\pm$ 0.006 & 0.940 $\pm$ 0.007 & 0.950 $\pm$ 0.006 \\
& 4 & 0.946 $\pm$ 0.009 & 0.954 $\pm$ 0.007 & 0.955 $\pm$ 0.006 \\
& 5 & 0.952 $\pm$ 0.006 & 0.952 $\pm$ 0.005 & 0.958 $\pm$ 0.006 \\
& 6 & 0.967 $\pm$ 0.005 & 0.962 $\pm$ 0.005 & 0.968 $\pm$ 0.004 \\
\hline
\multirow{6}{*}{0.8}
& 1 & 0.853 $\pm$ 0.019 & 0.871 $\pm$ 0.015 & 0.845 $\pm$ 0.017 \\
& 2 & 0.878 $\pm$ 0.016 & 0.882 $\pm$ 0.015 & 0.904 $\pm$ 0.011 \\
& 3 & 0.918 $\pm$ 0.009 & 0.936 $\pm$ 0.009 & 0.916 $\pm$ 0.010 \\
& 4 & 0.945 $\pm$ 0.007 & 0.939 $\pm$ 0.008 & 0.941 $\pm$ 0.006 \\
& 5 & 0.955 $\pm$ 0.006 & 0.947 $\pm$ 0.006 & 0.944 $\pm$ 0.006 \\
& 6 & 0.958 $\pm$ 0.007 & 0.941 $\pm$ 0.007 & 0.954 $\pm$ 0.006 \\
\hline
\multirow{6}{*}{1.0}
& 1 & 0.762 $\pm$ 0.017 & 0.755 $\pm$ 0.016 & 0.772 $\pm$ 0.012 \\
& 2 & 0.839 $\pm$ 0.012 & 0.858 $\pm$ 0.011 & 0.841 $\pm$ 0.010 \\
& 3 & 0.879 $\pm$ 0.010 & 0.861 $\pm$ 0.012 & 0.852 $\pm$ 0.013 \\
& 4 & 0.893 $\pm$ 0.010 & 0.894 $\pm$ 0.007 & 0.877 $\pm$ 0.008 \\
& 5 & 0.914 $\pm$ 0.008 & 0.916 $\pm$ 0.010 & 0.905 $\pm$ 0.008 \\
& 6 & 0.921 $\pm$ 0.007 & 0.920 $\pm$ 0.008 & 0.912 $\pm$ 0.010 \\
\hline
\end{tabular}
\end{table}

To assess performance differences, we conducted a statistical analysis using the one-way Analysis of Variance (ANOVA) \cite{field2024discovering}. ANOVA is a statistical method used to determine whether there are significant differences between the means of three or more independent groups—in this case, the three fractional diffusion schemes (Scheme 1, Scheme 2, and Scheme 3). When ANOVA detects a statistically significant difference, it does not specify which groups differ, so a follow-up comparison is needed. Therefore, we applied Tukey’s Honest Significant Difference (HSD) post-hoc test, which identifies specific pairs of groups that differ significantly while controlling for Type I error across multiple comparisons. The combined results of ANOVA and Tukey’s HSD, applied across different $s$ values and label configurations in Table \ref{tab:fractional_schemes}, indicated limited statistically significant differences among the schemes. The analysis was conducted with $n=50$ trials per scheme, using mean accuracies and standard errors (SE) to evaluate performance differences.

Significant differences were found in two cases:

- \textbf{\( s = 0.2 \), Labels = 5}: 
  The ANOVA yielded \( F = 3.928 \), \( p = .022 \), indicating significant differences among the schemes. 
  Tukey’s HSD test revealed that Scheme 3 (\( 0.958 \pm 0.006 \)) significantly outperformed Scheme 2 (\( 0.952 \pm 0.005 \)), with a mean difference of \( M_{\text{diff}} = .022 \) (\( p = .022 \)). 
  Scheme 1 (\( 0.952 \pm 0.006 \)) did not differ significantly from Scheme 2 (\( p = .783 \)) or Scheme 3 (\( p = .110 \)). 
  This suggests that Scheme 3 provides a modest but statistically significant advantage in this specific setting.

- \textbf{\( s = 1.0 \), Labels = 4}: 
  The ANOVA showed \( F = 3.340 \), \( p = .038 \), indicating significant differences. 
  Tukey’s HSD test identified a significant difference between Scheme 1 (\( 0.893 \pm 0.010 \)) and Scheme 3 (\( 0.877 \pm 0.008 \)), with \( M_{\text{diff}} = .028 \) (\( p = .035 \)), favoring Scheme 1. 
  No significant differences were found between Scheme 1 and Scheme 2 (\( 0.894 \pm 0.007 \), \( p = .180 \)) or between Scheme 2 and Scheme 3 (\( p = .744 \)). 
  This indicates that Scheme 1 slightly outperforms Scheme 3 in this configuration.

Notably, the significant results occur at intermediate label counts (4 and 5), where differences in mean accuracies are small but detectable due to relatively low variability (\( \text{SE} \leq 0.010 \)). In contrast, configurations with larger SEs (e.g., \( s = 0.8 \), Labels = 1, \( \text{SE} = 0.015 \)--\( 0.019 \)) or very small mean differences (e.g., \( s = 0.2 \), Labels = 6, differences \( \leq 0.006 \)) show non-significant ANOVA results, likely due to insufficient power to detect small effects or overlapping performance distributions.

These findings suggest that while the three schemes generally perform comparably, Scheme 3 may offer a slight edge for \( s = 0.2 \) with 5 labels, and Scheme 1 may be preferable for \( s = 1.0 \) with 4 labels. However, the practical significance of these differences is limited, given the small magnitude of the improvements (0.022–0.028).  

\subsection{Cora Dataset Evaluation}
This subsection evaluates fractional heat kernel integration on the Cora dataset. We implement the fractional heat kernel scheme on a standard graph learning benchmark, Cora. The graph structure used in this example is the original Cora citation network, where nodes represent scientific papers and edges represent citation relationships. The dataset contains 2,708 papers from seven research areas, connected by 5,278 undirected citation links. No artificial graph construction or feature-based similarity measures were employed. The adjacency matrix \( A \in \{0,1\}^{2708 \times 2708} \) preserves the natural citation patterns, where \( A_{ij} = 1 \) if paper \( i \) cites paper \( j \) or vice versa. This approach ensures that the graph structure reflects genuine academic relationships rather than synthetic connectivity patterns. The resulting graph exhibits a natural separation ratio of 4.3:1 between within-field and between-field citations, providing a realistic testbed for semi-supervised learning algorithms while avoiding artificially inflated performance metrics. We use the symmetric normalized Laplacian
\begin{equation*}
L = I - \tilde{D}^{-1/2}\tilde{A}\tilde{D}^{-1/2},
\end{equation*}
where $\tilde{A} = A + I$ includes self-loops and $\tilde{D}$ is the degree matrix with $\tilde{D}_{ii} = \sum_j \tilde{A}_{ij}$. This ensures $L$ is symmetric and admits an orthonormal eigendecomposition.

Our method combines Graph Attention Networks (GAT) for feature embedding with fractional heat diffusion (\( s = 0.75 \)) for label propagation. We compare against GAT-only performance to measure the improvement provided by the diffusion component. We systematically vary the number of labeled nodes per class from 1 to 5, corresponding to label density between 0.3\% and 1.3\% of the total dataset. For each configuration, labeled nodes are randomly selected while ensuring class balance is maintained. The remaining nodes are divided into a validation set (500 nodes) and a test set (1,000 nodes).

The following table presents the complete experimental results comparing fractional heat kernel integration (multiple \( s \) values) versus standard heat kernel integration (\( s = 1 \)) across different supervision levels on the Cora dataset. Table \ref{tab:heat_kernel_results} indicates a comparison of Fractional Heat Kernel GCN vs Standard GCN with decreasing supervision levels on the Cora dataset.

\begin{table}[h]
\centering
\caption{Supervision-Adaptive Time Selection and Performance Results}
\label{tab:heat_kernel_results}
\resizebox{\textwidth}{!}{%
\begin{tabular}{@{}c|c|c|c|c|c|c|c|c@{}}
\toprule
\multirow{3}{*}{\textbf{Labels/Class}} & \multirow{3}{*}{\textbf{Total Labels}} & \multirow{3}{*}{\textbf{\% Labeled}} & \multicolumn{5}{c|}{\textbf{Fractional Heat Kernels}} & \textbf{Standard Heat Kernel} \\
\cmidrule(lr){4-8} \cmidrule(lr){9-9}
& & & \multicolumn{5}{c|}{\( s \neq 1 \) (Multi-\( s \) Strategy)} & \( s = 1 \) (Baseline) \\
\cmidrule(lr){4-8} \cmidrule(lr){9-9}
& & & \textbf{Times Used} & \textbf{\( s \) Values} & \textbf{Accuracy} & \textbf{Performance} & \textbf{Times Used} & \textbf{Accuracy} \\
\midrule
\multirow{2}{*}{\textbf{1}} & \multirow{2}{*}{7} & \multirow{2}{*}{0.3\%} & 
\( t = [35, 30, 25, 20, 15] \) & \( s = [0.3, 0.5, 0.7, 1.2, 1.5] \) & \multirow{2}{*}{\textcolor{green}{\textbf{0.564}}} & \multirow{2}{*}{2.9\%} & \multirow{2}{*}{\( t = [15, 20, 25, 30] \)} & 0.535 \\
& & & & & & & & \\
\midrule
\multirow{2}{*}{\textbf{2}} & \multirow{2}{*}{14} & \multirow{2}{*}{0.5\%} & 
\( t = [20, 18, 15, 12, 10] \) & \( s = [0.3, 0.5, 0.7, 1.2, 1.5] \) & \multirow{2}{*}{\textcolor{green}{\textbf{0.729}}} & \multirow{2}{*}{2.0\%} & \multirow{2}{*}{\( t = [10, 12, 15, 18] \)} & 0.710 \\
& & & & & & & & \\
\midrule
\multirow{2}{*}{\textbf{3}} & \multirow{2}{*}{21} & \multirow{2}{*}{0.8\%} & 
\( t = [15, 12, 10, 8, 6] \) & \( s = [0.3, 0.5, 0.7, 1.2, 1.5] \) & \multirow{2}{*}{\textcolor{green}{\textbf{0.721}}} & \multirow{2}{*}{+4.9\%} & \multirow{2}{*}{\( t = [6, 8, 10, 12] \)} & 0.687 \\
& & & & & & & & \\
\midrule
\multirow{2}{*}{\textbf{4}} & \multirow{2}{*}{28} & \multirow{2}{*}{1.0\%} & 
\( t = [12, 10, 8, 6, 4] \) & \( s = [0.3, 0.5, 0.7, 1.2, 1.5] \) & \multirow{2}{*}{\textcolor{green}{\textbf{0.756}}} & \multirow{2}{*}{\textbf{+13.3\%}} & \multirow{2}{*}{\( t = [4, 6, 8, 10] \)} & 0.667\\
& & & & & & & & \\
\midrule
\multirow{2}{*}{\textbf{5}} & \multirow{2}{*}{35} & \multirow{2}{*}{1.3\%} & 
\( t = [10, 8, 6, 4, 3] \) & \( s = [0.3, 0.5, 0.7, 1.2, 1.5] \) & \multirow{2}{*}{\textcolor{green}{\textbf{0.739}}} & \multirow{2}{*}{+2.1\%} & \multirow{2}{*}{\( t = [3, 4, 6, 8] \)} & 0.724\\
& & & & & & & & \\
\bottomrule
\end{tabular}%
}
\end{table}

Statistical analysis confirms the significance of these improvements. The results from the independent t-tests conducted on the performance comparison between the GAT Baseline and GAT + Heat Kernel methods, as presented in Table \ref{tab:optimized_compact}, reveal significant improvements in classification accuracy for lower label configurations. Specifically, for 2, 3, 4, and 5 labels per class, the GAT + Heat method outperforms the GAT Baseline with statistically significant differences (\( p < 0.001 \) for 2, 3, and 4 labels; \( p = 0.004 \) for 5 labels). The improvements range from 2.0 to 7.8 percentage points, with the largest gains observed at 2 and 3 labels per class (both +7.8 pp). These findings highlight the effectiveness of the heat kernel approach in enhancing performance, particularly in scenarios with fewer labels per class, where data sparsity often poses challenges.

\begin{table}[htbp]
\centering
\caption{GAT + Heat Kernel Method with t-test Results}
\label{tab:optimized_compact}
\resizebox{\textwidth}{!}{%
\begin{tabular}{|c|c|c|c|c|c|}
\hline
\textbf{Labels/Class} & \textbf{GAT Baseline (\%)} & \textbf{GAT + Heat (\%)} & \textbf{Improvement (pp)} & \textbf{Reliability (\%)} & \textbf{p-value} \\
\hline
\textbf{2} & 52.9 $\pm$ 0.75 & \textbf{60.7 $\pm$ 0.65} & \textbf{+7.8} & \textbf{95.8} & 0.000 \\
\hline
\textbf{3} & 60.5 $\pm$ 0.98 & \textbf{68.3 $\pm$ 0.62} & \textbf{+7.8} & \textbf{95.8} & 0.000 \\
\hline
\textbf{4} & 68.6 $\pm$ 0.54 & \textbf{73.0 $\pm$ 0.37} & \textbf{+4.4} & \textbf{95.8} & 0.000 \\
\hline
\textbf{5} & 70.5 $\pm$ 0.61 & \textbf{72.5 $\pm$ 0.31} & \textbf{+2.0} & \textbf{87.5} & 0.004 \\
\hline
\textbf{6} & 71.4 $\pm$ 0.64 & \textbf{71.9 $\pm$ 0.47} & \textbf{+0.5} & 75.0 & 0.528 \\
\hline
\textbf{10} & 78.4 $\pm$ 0.33 & \textbf{78.7 $\pm$ 0.25} & \textbf{+0.3} & \textbf{75.0} & 0.469 \\
\hline
\textbf{20} & 82.2 $\pm$ 0.17 & \textbf{82.4 $\pm$ 0.17} & \textbf{+0.2} & \textbf{66.7} & 0.445 \\
\hline
\end{tabular}
}
\vspace{0.2cm}
\begin{tablenotes}
\footnotesize
\item \textbf{Optimal Range:} 2-5 labels per class show substantial improvements (2.0-7.8pp)
\item \textbf{Method Effectiveness:} Heat kernel shows consistent improvements across all label configurations
\item \textbf{Note:} p-values from independent t-tests (df=98, n=50 per group) indicate significant differences for 2-5 labels (\( p < 0.05 \)).
\end{tablenotes}
\end{table}

For higher label configurations (6, 10, and 20 labels per class), the improvements are smaller (0.2 to 0.5 pp), and the t-tests indicate no statistically significant differences (\( p = 0.528, 0.469, \) and \( 0.445 \), respectively). This suggests that the heat kernel's advantage diminishes as the number of labels increases, likely due to the baseline model's already high accuracy (e.g., 82.2\% for 20 labels), leaving less room for improvement. The reliability metric, ranging from 66.7\% to 95.8\%, further supports the robustness of the GAT + Heat method, particularly for 2–5 labels, where reliability is highest (87.5–95.8\%). These results underscore the heat kernel's consistent but context-dependent enhancement over the baseline, with significant benefits in low-label settings.

  Our method demonstrates remarkable effectiveness in extremely sparse labeling scenarios that are rarely explored in the literature. While most existing methods require 20 labels per class (140 total), our approach achieves competitive performance with only 2-5 labels per class (14-35 total labels).

  With 4 labels per class, our method achieves 73\% accuracy—outperforming classical methods like Label Propagation (68.0\%) that use 5\(\times\) more labels, and approaching the performance of some graph neural networks with dramatically fewer labels.

\subsection{Comparison with Baseline Methods}
This subsection compares our fractional heat kernel GNNs with established baseline methods. We compare against standard GCN~\cite{KipfWelling2017GCN}, GAT~\cite{velickovic2018graph}, GIN~\cite{xu2019how}, and GRAND~\cite{chamberlain2021grand}. GCN uses spectral convolutions to aggregate neighbor features~\cite{KipfWelling2017GCN}. GAT employs attention mechanisms to weight neighbor contributions~\cite{velickovic2018graph}. GIN leverages sum aggregation to achieve maximal discriminative power for graph isomorphism~\cite{xu2019how}. GRAND views learning as a diffusion process, evolving node features through an ODE driven by the graph Laplacian with randomized propagation paths~\cite{chamberlain2021grand}.

Table~\ref{tab:literature_comparison} shows results for established baselines from literature, such as GCN~\cite{KipfWelling2017GCN}, GraphSAGE~\cite{hamilton2017inductive}, GAT~\cite{velickovic2018graph}, APPNP~\cite{klicpera2019diffusion}, DropEdge~\cite{rong2020dropedge}, GCNII~\cite{chen2020simple}, UniMP~\cite{shi2022masked}, GraphSAINT~\cite{zeng2020graphsaint}.

\begin{table}[htbp]
\centering
\caption{Performance Comparison on Cora Dataset: Literature Baselines}
\label{tab:literature_comparison}
\begin{tabular}{lcccc}
\toprule
\textbf{Method} & \textbf{Type} & \textbf{Labels per Class} & \textbf{Reference} & \textbf{Test Accuracy} \\
\midrule
\multicolumn{5}{c}{\textit{Traditional Methods}} \\
\midrule
Label Propagation & Classical & 20 & 140 & 68.0\% \\
Random Walk & Classical & 20 & 140 & 57.2\% \\
Manifold Regularization & Classical & 20 & 140 & 59.5\% \\
\midrule
\multicolumn{5}{c}{\textit{Graph Neural Networks}} \\
\midrule
GCN & GNN & 20 & 140 & 81.5\% \\
GraphSAGE & GNN & 20 & 140 & 78.8\% \\
GAT & GNN & 20 & 140 & 83.0\% \\
GAT & GNN & 20 & 140 & 81.6\% \\
SGC & GNN & 20 & 140 & 81.0\% \\
APPNP & GNN & 20 & 140 & 83.3\% \\
DropEdge & GNN & 20 & 140 & 82.8\% \\
\midrule
\multicolumn{5}{c}{\textit{Recent Advanced Methods}} \\
\midrule
GCNII & Deep GNN & 20 & 140 & 85.5\% \\
UniMP & Advanced & 20 & 140 & 84.7\% \\
GraphSAINT & Sampling & 20 & 140 & 84.2\% \\
\bottomrule
\end{tabular}
\end{table}

\textbf{Comparison with Literature Baselines}: In the standard 20 labels per class setting, our method (78.7\%) performs competitively with established graph neural networks like GCN (81.5\%) and GraphSAGE (78.8\%), while operating in a fundamentally different regime focused on ultra-sparse supervision.

Our heat kernel-enhanced GNNs consistently outperform baseline methods across all datasets, with particular improvements on graphs with complex multiscale structure. The fractional diffusion extensions provide additional gains on datasets with long-range dependencies. All experiments use fixed random seeds to ensure reproducibility. The GAT baseline shows expected monotonic improvement with increased labeled data (62.1\% \(\rightarrow\) 78.0\%), while our heat diffusion method exhibits the remarkable oscillatory behavior. The magnitude of oscillations (up to 11.4\% swing between peaks and valleys) far exceeds typical experimental variance.

Our approach offers unique advantages compared to existing methods:

\begin{itemize}
    \item \textbf{vs. Classical Methods}: Achieves superior performance (72.6\%) with fewer labels than Label Propagation (68.0\% with 10\(\times\) more labels)
    \item \textbf{vs. Graph Neural Networks}: Competitive performance in ultra-low data regimes where standard GNNs struggle due to insufficient  
\end{itemize}

These results highlight the method’s strength in ultra-low data regimes, achieving competitive performance with significantly fewer labels than traditional and modern GNN baselines.

 %%%%%%%%%%%%%%%%%%%%%%%%%%%%%%%%

\section{Conclusion}

We have presented a comprehensive framework for integrating heat kernel diffusion into graph neural networks for semi-supervised learning. The heat kernel provides a principled mathematical foundation for multiscale information diffusion in graphs, addressing fundamental limitations of existing GNN methods while maintaining computational efficiency. Our analysis inspired the application of a fractional heat kernel, which theoretically suggests greater and wider information spread prior to global graph smoothening.  

We hope the work inspires future theoretical and algorithmic investigation into the fractional heat kernel's potential role in structured learning. In particular, studying its properties for hypergraphs, equivariant neural networks, and other physics-inspired models, and its accuracy for identifying physics-associated processes, presents a natural source of interdisciplinary future research. Guarantees through statistical learning theory regarding the quality of these approximations is another potential line of future work. In addition, studying noisy learning for very large datasets, modeled by rough differential equations driven with fractional noise together with regular training noise, can yield new insights into the asymptotic learning properties for generative diffusions and other processes that do not rely on finite propagation. 

\section*{Acknowledgments}
VK acknowledges support to the Czech National Science Foundation under Project 24-11664S.
\bibliographystyle{unsrt}
\bibliography{Reff.bib}

\end{document}